\documentclass[11pt]{article}
\usepackage[preprint]{acl}
\usepackage{times}
\usepackage{latexsym}
\usepackage[T1]{fontenc}
\usepackage[utf8]{inputenc}
\usepackage{microtype}
\usepackage{inconsolata}
\usepackage{graphicx}
\usepackage{amsmath,amssymb,amsthm}
\usepackage{booktabs}
\usepackage{algorithm}
\usepackage{algpseudocode}
\newcommand{\gear}{GEAR}
\newcommand{\galr}{\gear{}}
\newcommand{\jcomm}{\textsc{J-Comm}}
\newcommand{\norm}[1]{\left\lVert #1\right\rVert_2}
\newcommand{\unit}[1]{\frac{#1}{\norm{#1}}}
\newcommand{\R}{\mathbb{R}}

\newtheorem{theorem}{Theorem}

\title{J-Comm: Jacobian-Guided Latent Communication between LLM Agents}
\author{
  \textbf{Xiangwei Wang$^{1}$ \quad Wei Wang$^{1}$ \quad Sachith Seneviratne$^{1}$ \quad Ken Chen$^{1}$} \\
  \textbf{Yu Xia$^{1}$ \quad Peng Wang$^{2}$ \quad Saman Halgamuge$^{1}$} \\[0.5em]
  $^{1}$The University of Melbourne \quad $^{2}$Shanghai Jiao Tong University
}

\begin{document}
\maketitle

\begin{abstract}
Latent communication requires both an effective continuous reasoning process and an interface through which an external model can leverage its intermediate computation. We present \jcomm{}, a framework comprising two core components. First, A training-free \textbf{GE}ometric \textbf{A}greement for Latent \textbf{R}easoning (\gear{}) reweights token candidates over existing model embeddings, which monotonically optimizes geometric consensus among candidate tokens in embedding space at each step. Second, we introduce an H+J (Hidden-and-Jacobian) communication interface that gives the Receiver two complementary views of the Sender’s reasoning. The H channel supplies continuous inputs derived from feedback distributions, while the J channel conveys intermediate-state cues in vocabulary-indexed Jacobian-lens coordinates. Intuitively, H provides inputs for the Receiver to reason from, while J offers additional guidance within its computation. The Receiver reconstructs these cues using its own dictionary and injects them into intermediate layers during prefill, enabling communication without sharing key-value caches or requiring identical model widths. In Qwen3-8B $\rightarrow$ Qwen3-4B communication, \jcomm{} achieves the highest mean score across seven benchmarks, including 94.80\% on MATH500, 54.55\% on GPQA-Diamond, 73.33\% on AIME2024+2025, and 62.93\% success on ALFWorld. Compared with TextMAS, it improves performance by 1.29--8.59 percentage points and reduces generated text tokens by 19.2--74.8\%.

\end{abstract}

\section{Introduction}
\label{sec:introduction}

Chain-of-thought prompting elicits intermediate reasoning steps in text \citep{wei2022chain}.
Multi-agent systems use text to exchange and revise responses through debate or conversation \citep{du2024debate,wu2024autogen}.
Latent communication lets agents pass probability-weighted embeddings \citep{pham2024cipher}, learned representations \citep{du2025interlat,zheng2025thoughtcomm}, or shared working memory \citep{zou2025latentmas}.
StateBridge aligns message states with input embeddings through a closed-form map \citep{peng2026statebridge}.
Here we consider both how the Sender constructs its continuous reasoning steps and which parts of that computation the Receiver receives.
We study these choices together: the inputs used to advance a reasoning trajectory also determine the messages passed between models.

\begin{figure*}[t]
  \centering
  \includegraphics[width=\textwidth]{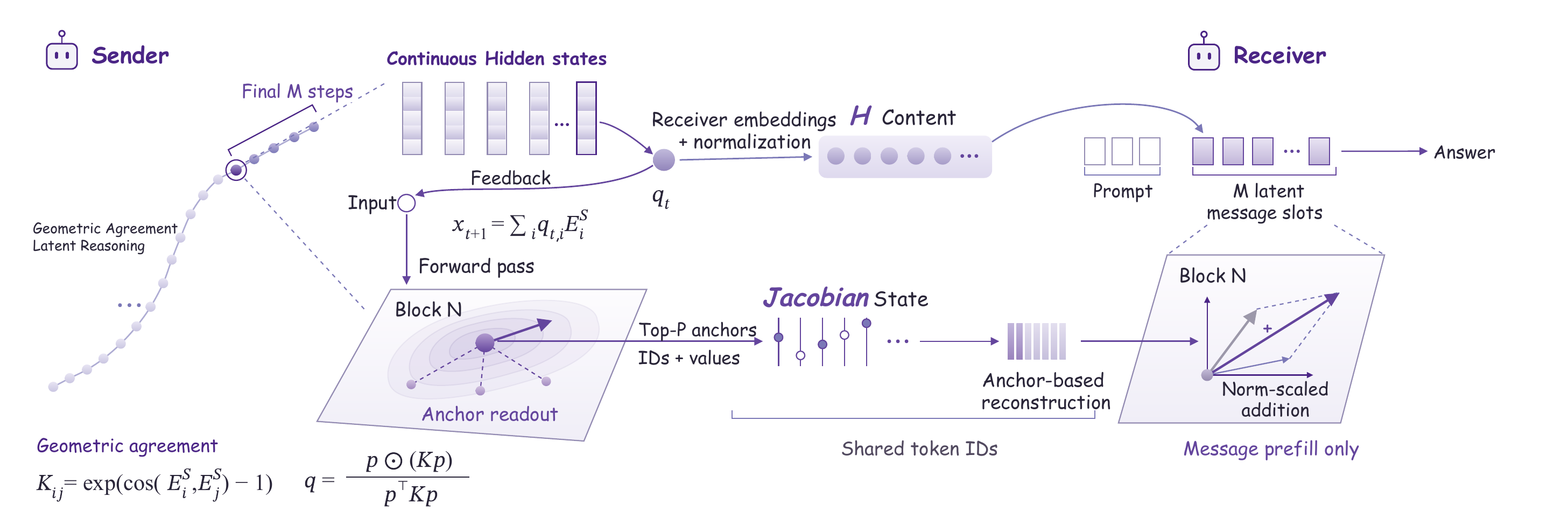}
  \caption{Overview of \jcomm{}.
  \galr{} reweights candidate probabilities by geometric agreement and feeds the resulting embedding mixture into the Sender's next position.
  The final $M$ consumed steps supply two messages: $H$ maps their feedback distributions into Receiver input embeddings, while $J$ carries token-anchor IDs and signed intermediate-state readouts for reconstruction in the Receiver's own dictionary.
  The $J$ update is added at the $M$ message positions during prefill.
  \emph{Top-P} denotes a fixed number of anchors ($P=32$), distinct from nucleus sampling.
  The two displayed blocks use index 26 in our main setting.}
  \label{fig:framework}
\end{figure*}

Coconut feeds the last hidden state back as the next input \citep{hao2025coconut}.
Soft Thinking instead constructs a continuous input by averaging candidate token embeddings under the model's predictive distribution \citep{zhang2025softthinking}.
The weights do not explicitly reflect whether the candidate embeddings point in similar directions.
The same probability profile can place mass on candidate sets with different geometries.
\textbf{GE}ometric \textbf{A}greement for Latent \textbf{R}easoning (\gear{}) uses this geometry when constructing the next input.
Each candidate receives support from the candidate set, weighted by the candidates' probabilities and embedding similarities.
We multiply its probability by this support, normalize the weights, and feed back a mixture of the original token embeddings.
The rule uses the frozen model's existing embedding table and requires no trained feedback module.

\jcomm{} constructs its messages from the steps that the Sender actually consumes.
For each retained step, we pair the feedback distribution with the intermediate state produced after its mixture enters the model.
The $H$ channel applies the same weights to the Receiver's token embeddings, following the receiver-side embedding construction used in CIPHER \citep{pham2024cipher}.
This gives the Receiver an input representation of the consumed distribution.
The $J$ channel exposes selected readouts of the paired intermediate state at a later layer.
Both channels thus refer to the same Sender step and act at different depths in the Receiver.

Prior work also communicates by combining or replacing intermediate activations \citep{ramesh2025activations}.
Our $J$ readouts use the Jacobian lens \citep{gurnee2026verbalizable}.
Shared token IDs identify anchors in the Sender's and Receiver's own dictionaries.
The Sender transmits the selected IDs and signed readouts; the Receiver reconstructs a vector with its own dictionary and adds a norm-scaled residual update during prefill.
The protocol allows different hidden widths, while keeping each model's KV cache local.
Its correspondence is defined by shared token identities and does not require matching activation coordinates.

Our contributions are:
\begin{enumerate}
  \item \textbf{\gear{}}, a continuous reasoning recurrence that uses probability-weighted geometric support to construct the next input from candidate token embeddings.
  \item \textbf{An $H+J$ communication protocol} that pairs consumed feedback distributions with intermediate token-coordinate readouts and forms $H$ inputs and $J$ residual writes using the Receiver's own embeddings and Jacobian-lens dictionary.
\end{enumerate}

\section{Background and Problem Setting}
\label{sec:background}

\subsection{Latent Reasoning and Communication}

Let $S$ and $R$ denote a Sender and a Receiver, with hidden widths $d_S$ and $d_R$.
Both agents receive the task input with their own role prompts.
The Sender performs continuous reasoning; the Receiver receives a sequence of latent messages and generates the answer.
We first define the interface for a shared token vocabulary $\mathcal V$ with corresponding token IDs; Appendix~\ref{app:cross-family} extends the correspondence across vocabularies.
The models' embedding vectors and hidden widths may differ.

Write $E^m_v\in\R^{d_m}$ for token $v$'s input embedding in model $m\in\{S,R\}$, and $u^m_v$ for its unembedding vector.
At iteration $t$, the Sender's final normalized state $h^S_t$ produces logits $z_t=U^S h^S_t$.
A continuous reasoning rule converts these logits or states into an embedding $x^S_{t+1}$, which is consumed at the next position while updating the Sender's local key--value (KV) cache.
This increases inference-time computation without generating a discrete reasoning token at each step.

The communication interface is distinct from this recurrence.
We retain $M$ steps from a trajectory of length $T$ and construct $M$ Receiver input vectors, together with $M$ intermediate-layer messages.
Thus, the number of communicated positions can remain fixed while Sender computation varies.

\subsection{Jacobian-Lens Coordinates}
\label{sec:jspace}

The Jacobian lens relates an intermediate residual stream to the model's later computation \citep{gurnee2026verbalizable}.
For model $m$, let $r^m_{\ell,a}$ be the residual at layer $\ell$ and position $a$, and $r^m_{L,b}$ the pre-final-normalization residual at a later position $b$.
An averaged Jacobian has the form
\begin{equation}
 \mathcal J^m_\ell
 =\mathbb E_{c,a,b\geq a}\!
 \left[\frac{\partial r^m_{L,b}}{\partial r^m_{\ell,a}}\right],
 \label{eq:jacobian}
\end{equation}
where $c$ ranges over calibration contexts.
Combining this map with token unembedding vectors yields directions associated with the model's disposition to verbalize particular tokens.
The original work studies sparsely active directions in this dictionary as J-space and reports evidence of workspace-like functions.
Here we use the dictionary to define token-indexed measurements of an intermediate state.

\paragraph{Dictionary used in our implementation.}
Our cached lenses target the pre-final-RMSNorm residual.
We fold the final RMSNorm gain $\gamma_m$ into each unembedding vector and define a unit anchor
\begin{equation}
 b^m_{\ell,v}
 =\unit{(\mathcal J^m_\ell)^\top
       \operatorname{diag}(\gamma_m)u^m_v}.
 \label{eq:anchor}
\end{equation}
The token ID $v$ names the coordinate; the vector $b^m_{\ell,v}$ is specific to the model and layer.
These anchors are generally nonorthogonal.
For a selected support $I$, the local communication space is
\begin{equation}
 \mathcal S^m_{\ell,I}
 =\left\{\sum_{v\in I}a_v b^m_{\ell,v}:a_v\in\R\right\}.
\end{equation}
Each message selects a small set of anchors from this dictionary.
The transmitted values are inner products with these anchors, rather than coefficients in an anchor expansion.
Their nonorthogonality matters when reconstructing a vector from the readouts.

\section{Geometric Agreement for Latent Reasoning}
\label{sec:galr}

\subsection{Candidates and Geometric Support}

We first apply temperature scaling and candidate filtering to $z_t$, obtaining a candidate set $\mathcal C_t$ and normalized probabilities $p_{t,i}$.
Soft Thinking would feed back $\sum_i p_{t,i}E^S_i$.
\galr{} additionally evaluates agreement within this candidate set.
Let $\widehat E^S_i=E^S_i/\norm{E^S_i}$.
We define
\begin{align}
 K_{t,ij}
 &=\exp\!\left((\widehat E^S_i)^\top
                 \widehat E^S_j-1\right),
 \label{eq:kernel}\\
 s_{t,i}&=\sum_{j\in\mathcal C_t}K_{t,ij}p_{t,j}.
 \label{eq:support}
\end{align}
An embedding receives more support when it is close to candidates that carry substantial probability mass.
The diagonal is included: $K_{t,ii}=1$.
Similarity is computed using normalized embeddings, while the feedback vector below uses the original embeddings.

\subsection{Agreement-Weighted Feedback}

We combine the predictive probability and geometric support multiplicatively:
\begin{equation}
 \boxed{q_t=
 \frac{p_t\odot(K_t p_t)}{p_t^\top K_t p_t}.}
 \label{eq:q}
\end{equation}
All kernel entries are positive, so $q_t$ is a normalized distribution with the same nonzero support as $p_t$.
The next continuous input is
\begin{equation}
 \boxed{x^S_{t+1}=
 \sum_{i\in\mathcal C_t}q_{t,i}E^S_i.}
 \label{eq:feedback}
\end{equation}
The Sender consumes this vector through its usual transformer forward pass and updates its own KV cache.
No sampled token or learned feedback module is required.
We call Equation~\ref{eq:q} the \emph{geometric agreement feedback} (GAF) rule, and the resulting recurrent reasoning procedure \galr{}.

\paragraph{Interpretation.}
For $p_{t,i}>0$, the ratio $q_{t,i}/p_{t,i}$ equals the candidate's support divided by the probability-weighted mean support.
Candidates with above-average support therefore gain relative weight.
Because self-support contributes $p_{t,i}^2$ to the unnormalized weight, the rule combines probability sharpening with interactions among distinct candidates.
It uses a fixed cosine kernel rather than learned query, key, or value projections.
For $k=|\mathcal C_t|$ candidates, constructing the kernel costs $O(k^2d_S)$; our configuration uses at most 10 candidates.

\subsection{One-Step Geometric Selectivity}
\label{sec:monotone}

Fix the candidate set and suppress the step index $t$. Its geometric agreement is
\begin{equation}
 C(p)=p^\top Kp=\mathbb E_{i,j\sim p}[K_{ij}].
 \label{eq:agreement}
\end{equation}
GAF is a Baum--Eagon growth transformation: $C(q)\ge C(p)$, with equality exactly when $q=p$ \citep{baum1967inequality}.
It is also a symmetric discrete-time replicator update \citep{hofbauer1998evolutionary,losert1983dynamics}.
Appendix~\ref{app:gaf} gives the supporting derivation and an agreement-based bound on the change in the consumed embedding mixture.
Theorem~\ref{thm:gaf} characterizes how target affinity and agreement among the other candidates determine the target's probability gain in one GAF step; the full statement and proof appear in Appendix~\ref{app:gaf}.

\subsection{Adaptive Computation and Paired States}

The Sender runs for at least 40 consumed latent steps, independently of the communicated suffix length $M$.
After this minimum, it stops before consuming a step whose highest-weight $q$ candidate is the end-of-thinking token, or after sustained low predictive entropy; a hard cap bounds computation.
The exact policy appears in Appendix~\ref{app:settings}.

We retain the final $M$ \emph{consumed} feedback distributions.
For each $q_t$, we also retain $r^S_{\ell_S,t+1}$, the intermediate residual produced when $x^S_{t+1}$ is consumed.
Thus $q_t$ records the weights used to construct $x^S_{t+1}$, and $r^S_{\ell_S,t+1}$ records the resulting intermediate computation.
The retained pairs are
\begin{equation}
 \mathcal T=\{(q_t,r^S_{\ell_S,t+1})\}_{t=T-M}^{T-1}.
 \label{eq:trajectory}
\end{equation}
In the communication setting, the Sender hands off this trajectory without generating a text answer.

\section{Dual-Channel Latent Communication}
\label{sec:communication}

The retained pairs in Equation~\ref{eq:trajectory} provide the messages for both channels (Figure~\ref{fig:framework}).
$H$ maps the consumed feedback distribution to the Receiver's input embeddings, where it is processed from the first layer.
$J$ carries token-indexed measurements of the intermediate state produced by that input and writes a reconstructed vector at a selected later layer.
The Receiver therefore receives an input constructed from the feedback weights and readouts of the computation they induced.

\subsection{$H$: Transporting the Consumed Distribution}
\label{sec:h}

For each retained $q_t$, we use the corresponding Receiver embeddings:
\begin{align}
 \bar e^R_t&=\sum_{i\in\mathcal C_t}q_{t,i}E^R_i,\\
 H_t&=r_E^R\frac{\bar e^R_t}{\norm{\bar e^R_t}},
 \label{eq:h}
\end{align}
where $r_E^R$ is the fixed mean norm of Receiver input embeddings.
The $H$ channel is $[H_{T-M},\ldots,H_{T-1}]$.
The same probability weights thus construct inputs in the two models' respective embedding spaces, without requiring $d_S=d_R$.
Unlike the Sender feedback in Equation~\ref{eq:feedback}, this cross-model input is norm calibrated.

The $H$ payload is an expected-embedding representation of $q_t$.
This receiver-side construction follows CIPHER \citep{pham2024cipher}, with the mixture weights supplied by the consumed \galr{} distribution and the norm calibrated to the Receiver embedding table.

\subsection{$J$: Reading Intermediate Token Coordinates}
\label{sec:jread}

The $J$ channel uses the fixed Sender dictionary from Equation~\ref{eq:anchor}.
For each retained intermediate state, we compute the signed normalized readout
\begin{equation}
 c_t(v)=
 (b^S_{\ell_S,v})^\top
 \frac{r^S_{\ell_S,t+1}}{\norm{r^S_{\ell_S,t+1}}}.
 \label{eq:readout}
\end{equation}
We select $I_t$, the $P=32$ shared nonspecial token IDs with the largest readout values.
We rank the signed readouts by their values and retain these values directly.
The message is the list
\begin{equation}
 J_t=\{(v,c_t(v)):v\in I_t\}.
 \label{eq:jpayload}
\end{equation}
The anchor dictionary is fixed after calibration, while the selected IDs and their values vary across steps and questions.
These anchors are chosen from the eligible vocabulary, independently of the small candidate set used by \galr{}.

\subsection{Reconstruction in the Receiver's Dictionary}
\label{sec:jwrite}

Using the same IDs $I_t$, the Receiver retrieves its own anchor rows to form $B^R_{\ell_R,I_t}\in\R^{P\times d_R}$.
Let $c_t\in\R^P$ stack the transmitted readouts in the same order.
We reconstruct a vector using a regularized dual dictionary:
\begin{align}
 D^R_{\ell_R,I_t}
 &=\left(B^R_{\ell_R,I_t}(B^R_{\ell_R,I_t})^\top
          +\lambda I_P\right)^{-1}B^R_{\ell_R,I_t},
 \label{eq:dual}\\
 d^R_{\ell_R,t}&=(D^R_{\ell_R,I_t})^\top c_t.
 \label{eq:reconstruct}
\end{align}
Here $\lambda=10^{-3}$.
Equivalently, $d^R_{\ell_R,t}$ minimizes
$\norm{B^R_{\ell_R,I_t}d-c_t}^2+\lambda\norm{d}^2$.
This accounts for overlap among anchor directions; directly summing the anchors would generally mix their readouts.
The token IDs select corresponding rows in the two dictionaries; the Receiver's anchor geometry determines the reconstructed vector.

\paragraph{Magnitude calibration.}
To calibrate injection strength across layer-specific dictionaries, we compute
\begin{equation}
 \rho_t=\norm{d^R_{\ell_{\rm ref},t}},\qquad
 \Delta^R_t=\rho_t\frac{d^R_{\ell_R,t}}{\norm{d^R_{\ell_R,t}}},
 \label{eq:radius}
\end{equation}
using a fixed Receiver reference lens at $\ell_{\rm ref}=29$.
The reference and target reconstructions use the same IDs and readouts.
The reference layer calibrates magnitude; the actual write is at $\ell_R$.

\paragraph{Prefill-only write.}
The Receiver input consists of its question and role prompt, the $M$ vectors $H_t$, and an answer-generation cue.
At the output of block $\ell_R$, each corresponding message-position residual $r^R_{\ell_R,t}$ is updated as
\begin{equation}
 \boxed{r^R_{\ell_R,t}\leftarrow
 r^R_{\ell_R,t}+\alpha\norm{r^R_{\ell_R,t}}\Delta^R_t,}
 \label{eq:injection}
\end{equation}
with $\alpha=1$.
Here $t$ labels the Receiver message slot paired with Sender step $t$, rather than an absolute sequence position.
The update is added to the residual already produced by the Receiver.
Only the message positions are edited during prefill.
The Receiver then completes prefill and decodes normally from its own KV cache, with no further $J$ writes during answer generation.

\paragraph{Finite-write analysis.}
Theorem~\ref{thm:j-fidelity} gives the selected-anchor readback and finite logit response, and characterizes output-order preservation under an endpoint-error bound. Appendix~\ref{app:j-fidelity} provides the full statement and proof.

\begin{algorithm}[t]
\caption{\jcomm{} with \galr{} and $H+J$}
\label{alg:jcomm}
\begin{algorithmic}[1]
\Require Frozen $S,R$; calibrated dictionaries; task input
\State Prefill $S$ with its role prompt and the input
\While{the stopping condition has not been met}
  \State Obtain candidate distribution $p_t$ from $S$
  \State Compute $K_t$ and $q_t$ (Eqs.~\ref{eq:kernel}, \ref{eq:q})
  \State Consume $\sum_i q_{t,i}E^S_i$; update local cache
  \State Retain $(q_t,r^S_{\ell_S,t+1})$ in the tail window
\EndWhile
\For{each of the final $M$ retained pairs}
  \State Construct $H_t$ in Receiver embeddings
  \State Read the top $P$ token anchors to obtain $J_t$
  \State Reconstruct and calibrate $\Delta^R_t$ in $R$
\EndFor
\State Prefill $R$ with its prompt, $H$, and answer cue
\Statex \hspace{\algorithmicindent}while applying Eq.~\ref{eq:injection} at message positions
\State Decode the answer using $R$'s local KV cache
\end{algorithmic}
\end{algorithm}

\section{Experiments}
\label{sec:experiments}

We evaluate geometric feedback, dual-channel communication, and transfer across model pairs.
All experimental results are averaged over three runs.
\emph{GAF} in the tables denotes the geometric feedback rule used by \gear{}.

\subsection{Experimental Setup}

\paragraph{Tasks and budgets.}
The primary pair is Qwen3-8B $\to$ Qwen3-4B.
We evaluate GSM8K~\citep{cobbe2021gsm8k} (1,319 questions), MATH500~\citep{lightman2024verify} (500), GPQA-Diamond~\citep{rein2024gpqa} (198), MedQA-USMLE four-option test~\citep{jin2021medqa} (1,273), EvalPlus MBPP+ v0.2.0~\citep{liu2023evalplus} (378), AIME2024+2025~\citep{h4_2025aime2024,lin2025aime2025} (60), and ALFWorld~\citep{shridhar2021alfworld} (134 unseen tasks).
Sender and Receiver text output caps are specified separately for each dataset in Appendix~\ref{app:task-budgets} (Table~\ref{tab:text-budgets}).
ALFWorld uses a 2,048-token cap per agent per action turn, with at most 50 turns per episode.
The adaptive latent Sender has a 32,768-step cap and communicates a suffix of at most $M$ consumed steps, with $M=400$ in the primary configuration.

\paragraph{Inference and scoring.}
Continuous Sender computation uses Hugging Face, and the primary Receiver uses vLLM 0.10.2 V0.
The primary Qwen configuration reads and writes block 26, with frozen models and fixed Jacobian dictionaries; calibration and execution details appear in Appendix~\ref{sec:implementation}.
Model execution uses bfloat16.
Text decoding uses temperature 0.6, top-$k=20$, and top-$p=0.95$; latent candidate filtering follows Appendix~\ref{app:settings}.
Receiver backends and output caps are matched within each comparison.
TextMAS uses cached HF Sender plans or vLLM-generated plans according to the configuration.

Mathematical answers use answer-equivalence scoring; multiple-choice answers use the final option letter; MBPP+ reports pass@1 on both base and additional EvalPlus tests.
Length-truncated answers are scored as incorrect.
The multiple-choice parser accepts a boxed option letter, optionally followed by option text, with the same scoring rule for every arm.
The feedback figure reports question-paired 95\% bootstrap intervals; the layer and communication-length figures report standard deviations across runs.
Sender latent steps and generated text tokens are accounted for separately.

\paragraph{Controls.}
Receiver-only answers without a message, while TextMAS supplies the Sender's generated plan.
Identity40 uses 40 latent recurrence steps as proposed by LatentMAS~\cite{zou2025latentmas},  whereas Identity($T_{\rm GAF}$) matches each question's GAF step count.
Both use Receiver block 26 in the primary $H+J$ comparisons.
The paired $H$ and $H+J$ ablations reuse the same Sender trajectory.
Null-slot controls use vocabulary-mean Receiver embeddings at the ordinary message radius; $J$-only adds the matched $J$ update to these slots.
Shuffled-$J$ controls retain the recipient's $H$ and intervention radius while taking $J$ from a different question.

\subsection{Primary Cross-Scale Results}

\begin{table*}[t]
\centering\small\setlength{\tabcolsep}{3.5pt}
\resizebox{\textwidth}{!}{\begin{tabular}{@{}lrrrrrrr@{}}
\toprule
Method & GSM8K & MATH500 & GPQA-Diamond & MedQA & MBPP+ & AIME24+25 & ALFWorld \\
\midrule
Text MAS & 93.25 {\scriptsize (ref.)} & 93.20 {\scriptsize (ref.)} & 45.96 {\scriptsize (ref.)} & 78.55 {\scriptsize (ref.)} & 71.16 {\scriptsize (ref.)} & 68.33 {\scriptsize (ref.)} & 61.39 {\scriptsize (ref.)} \\
Receiver only & 86.58 {\scriptsize ($\downarrow$36.5\%)} & 88.20 {\scriptsize ($\downarrow$32.6\%)} & 39.90 {\scriptsize ($\downarrow$22.5\%)} & 69.76 {\scriptsize ($\downarrow$20.8\%)} & 72.22 {\scriptsize ($\downarrow$27.6\%)} & 67.78 {\scriptsize ($\uparrow$0.6\%)} & 49.25 {\scriptsize ($\downarrow$55.8\%)} \\
Identity40+$H+J26$ & 86.73 {\scriptsize ($\downarrow$36.6\%)} & 89.40 {\scriptsize ($\downarrow$32.0\%)} & 39.39 {\scriptsize ($\downarrow$22.5\%)} & 70.23 {\scriptsize ($\downarrow$25.4\%)} & 71.16 {\scriptsize ($\downarrow$26.7\%)} & 69.44 {\scriptsize ($\uparrow$9.1\%)} & 53.73 {\scriptsize ($\downarrow$56.4\%)} \\
Identity($T_{\rm GAF}$)+$H$ & 87.19 {\scriptsize ($\downarrow$38.7\%)} & 89.00 {\scriptsize ($\downarrow$34.4\%)} & 39.39 {\scriptsize ($\downarrow$27.0\%)} & 71.25 {\scriptsize ($\downarrow$26.9\%)} & 71.69 {\scriptsize ($\downarrow$29.5\%)} & 65.00 {\scriptsize ($\uparrow$1.4\%)} & 54.25 {\scriptsize ($\downarrow$58.1\%)} \\
Identity($T_{\rm GAF}$)+$H+J26$ & 87.41 {\scriptsize ($\downarrow$38.8\%)} & 89.00 {\scriptsize ($\downarrow$35.3\%)} & 43.43 {\scriptsize ($\downarrow$26.0\%)} & 70.15 {\scriptsize ($\downarrow$27.5\%)} & 73.02 {\scriptsize ($\downarrow$31.0\%)} & 71.11 {\scriptsize ($\downarrow$3.6\%)} & 57.32 {\scriptsize ($\downarrow$60.3\%)} \\
GAF+$H$ & 92.12 {\scriptsize ($\downarrow$47.5\%)} & 93.60 {\scriptsize ($\downarrow$54.5\%)} & 49.49 {\scriptsize ($\downarrow$48.4\%)} & 78.40 {\scriptsize ($\downarrow$54.1\%)} & 73.54 {\scriptsize ($\downarrow$35.9\%)} & 72.78 {\scriptsize ($\downarrow$8.0\%)} & 59.70 {\scriptsize ($\downarrow$66.2\%)} \\
GAF+$H+J26$ & \textbf{94.54} {\scriptsize ($\downarrow$55.0\%)} & \textbf{94.80} {\scriptsize ($\downarrow$54.4\%)} & \textbf{54.55} {\scriptsize ($\downarrow$44.2\%)} & \textbf{80.52} {\scriptsize ($\downarrow$58.3\%)} & \textbf{76.72} {\scriptsize ($\downarrow$58.3\%)} & \textbf{73.33} {\scriptsize ($\downarrow$19.2\%)} & \textbf{62.93} {\scriptsize ($\downarrow$74.8\%)} \\
\bottomrule
\end{tabular}
}
\caption{Qwen3-8B $\to$ Qwen3-4B results averaged over three runs. Cells report accuracy, pass@1, or ALFWorld success rate (\%), followed by the percentage change in generated text tokens relative to TextMAS ($\downarrow$: fewer tokens; $\uparrow$: more tokens). TextMAS includes Sender thinking and plan generation plus Receiver output; latent methods generate text at the Receiver. ALFWorld token totals include all decisions in each task. Bold marks the highest mean score in each column.}
\label{tab:main-results}
\end{table*}

GAF+$H+J26$ achieves the highest mean score among the compared methods on all seven benchmarks (Table~\ref{tab:main-results}).
It reaches 94.54\% on GSM8K, 94.80\% on MATH500, 54.55\% on GPQA-Diamond, 80.52\% on MedQA, and 76.72\% on MBPP+.
The corresponding TextMAS scores are 93.25\%, 93.20\%, 45.96\%, 78.55\%, and 71.16\%, giving gains of 1.29, 1.60, 8.59, 1.97, and 5.56 percentage points.
On AIME2024+2025, GAF+$H+J26$ obtains 73.33\%, compared with 72.78\% for GAF+$H$ and 68.33\% for TextMAS, a gain of 5.00 percentage points over text communication.

The complete method also exceeds both Identity40+$H+J26$ and the step-matched Identity($T_{\rm GAF}$)+$H+J26$ across all seven tasks.
On MATH500, for example, these two controls achieve 89.40\% and 89.00\%, compared with 94.80\% for GAF+$H+J26$.
The reported text-token reductions for GAF+$H+J26$ range from 19.2\% on AIME to 74.8\% on ALFWorld; on the five non-AIME static tasks, they range from 44.2\% to 58.3\%.

On ALFWorld, GAF+$H+J26$ achieves 62.93\% success, followed by TextMAS at 61.39\% and GAF+$H$ at 59.70\%.
Receiver-only and Identity40+$H+J26$ obtain 49.25\% and 53.73\%; the step-matched Identity+$H$ and Identity+$H+J26$ controls reach 54.25\% and 57.32\%.
The complete method improves over TextMAS by 1.54 percentage points and over GAF+$H$ by 3.23 points.

\subsection{Effect of Geometric Feedback}
\label{sec:feedback-controls}

We compare GAF with Soft Thinking ($q=p$) and self-only sharpening ($q_i\propto p_i^2$), matching each control to the corresponding GAF trajectory length and candidate filter.
All three methods use the final 40 positions, $H+J26$, and an 8,192-token Receiver cap.
Figure~\ref{fig:feedback-gains} compares the Sender feedback rules under this shared communication configuration.

\begin{figure}[t]
\centering
\includegraphics[width=\columnwidth]{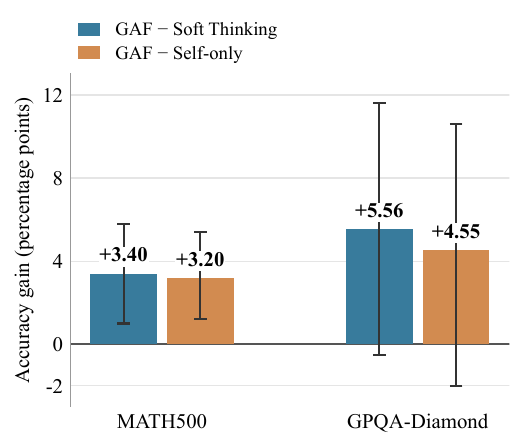}
\caption{Mean GAF accuracy gains over matched-step Soft Thinking and Self-only across three runs on MATH500 and GPQA-Diamond, with a fixed $H+J26$ interface. Error bars show question-paired 95\% bootstrap intervals.}
\label{fig:feedback-gains}
\end{figure}

On MATH500, GAF improves accuracy by 3.40 percentage points over Soft Thinking and 3.20 over Self-only.
The corresponding gains on GPQA-Diamond are 5.56 and 4.55 points.
GAF outperforms both the unmodified probability mixture and self-only sharpening, supporting the use of interactions among candidate embeddings in the feedback rule.

\subsection{Receiver Injection Layer}
\label{sec:j-layer-ablation}

\begin{figure}[t]
\centering
\includegraphics[width=\columnwidth]{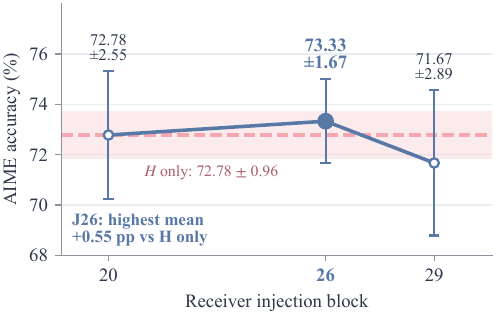}
\caption{Receiver-layer ablation on AIME2024+2025 with the same 400-position $H$ message. Points and error bars show mean $\pm$ SD across three runs; the dashed line and band show the $H$-only mean $\pm$ SD. The $H$-only and J26 results also appear in Table~\ref{tab:main-results}.}
\label{fig:j-layer-ablation}
\end{figure}

We hold the Sender readout block at 26 and compare Receiver write blocks 20, 26, and 29 (zero-based indices), together with an $H$-only control.
The three $J$ arms share the Sender trajectory, top-32 anchor IDs and readouts, and per-position relative intervention radius; each reconstructs the direction with its target-layer dictionary.
The write is applied to the 400 message positions during prefill.

Block 26 gives the highest mean accuracy, 73.33\%, compared with 72.78\% at block 20 and without $J$, and 71.67\% at block 29 (Figure~\ref{fig:j-layer-ablation}).

\subsection{Communication Length}
\label{sec:length-selection}

\begin{figure}[t]
\centering
\includegraphics[width=\columnwidth]{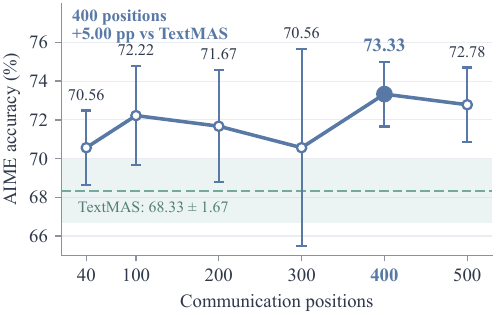}
\caption{Communication-length ablation on AIME2024+2025 using GAF+$H+J26$ and a shared Sender cache. Points and error bars show mean $\pm$ SD across three runs; the dashed line and band show the TextMAS mean $\pm$ SD from Table~\ref{tab:main-results}.}
\label{fig:length-selection}
\end{figure}

Among the evaluated communication lengths, 400 positions gives the highest mean accuracy: 73.33\% with a standard deviation of 1.67 percentage points (Figure~\ref{fig:length-selection}).
The 40-position result is 70.56\%, giving an increase of approximately 2.8 percentage points at 400 positions.
Increasing the communication length to 500 positions yields 72.78\%.
The GAF+$H+J26$ result on AIME2024+2025 in Table~\ref{tab:main-results}, the J26 point in Figure~\ref{fig:j-layer-ablation}, and the $M=400$ point in Figure~\ref{fig:length-selection} use the same Sender cache and evaluation configuration.

\subsection{External Two-Agent Baselines}

\begin{table*}[t]
\centering\small\setlength{\tabcolsep}{7pt}
\resizebox{\textwidth}{!}{\begin{tabular}{@{}lrrrrrr@{}}
\toprule
& \multicolumn{3}{c}{Qwen3-4B $\to$ 4B} & \multicolumn{3}{c}{Qwen3-8B $\to$ 8B} \\
\cmidrule(lr){2-4}\cmidrule(l){5-7}
Method & MATH500 & GPQA-Diamond & AIME24+25 & MATH500 & GPQA-Diamond & AIME24+25 \\
\midrule \\
GAF+$H+J26$ & \textbf{92.00} & 41.92 & \textbf{68.33} & \textbf{95.20} & 49.49 & \textbf{75.00} \\
LatentMAS   & 83.20          & 38.38 & 61.67          & 86.80          & 47.10 & 51.67 \\
StateBridge & 89.40          & \textbf{45.45} & 61.67 & 93.20          & \textbf{52.50} & 68.33 \\
CIPHER      & 80.40          & 35.35 & 58.33          & 84.20          & 41.41 & 61.67 \\
\bottomrule
\end{tabular}
}
\caption{Homogeneous two-agent communication accuracy (\%), averaged over three runs. GAF+$H+J26$ uses 400 communicated positions.}
\label{tab:external}
\end{table*}

We compare J-Comm with LatentMAS, StateBridge, and CIPHER under the same Receiver backend and prompts (Table~\ref{tab:external}).
LatentMAS uses its official latent recurrence and realignment; replaying the consumed Sender inputs in vLLM reconstructs its working-memory KV cache before the Receiver prompt.
StateBridge uses a vLLM text plan, HF teacher-forced state extraction, and its official mapping with up to 64 post-thinking states.
CIPHER communicates probability-weighted token embeddings.

With two Qwen3-4B agents, J-Comm achieves 92.00\% on MATH500, compared with 83.20\% for LatentMAS, 89.40\% for StateBridge, and 80.40\% for CIPHER.
On AIME, J-Comm reaches 68.33\%, compared with 61.67\%, 61.67\%, and 58.33\%, respectively.
On GPQA, J-Comm obtains 41.92\%, above LatentMAS (38.38\%) and CIPHER (35.35\%) and below StateBridge (45.45\%).

With two Qwen3-8B agents, J-Comm achieves 95.20\% on MATH500, compared with 86.80\% for LatentMAS, 93.20\% for StateBridge, and 84.20\% for CIPHER.
Its AIME accuracy is 75.00\%, exceeding the three baselines at 51.67\%, 68.33\%, and 61.67\%.
On GPQA, J-Comm reaches 49.49\%, compared with 47.10\% for LatentMAS, 52.50\% for StateBridge, and 41.41\% for CIPHER.
J-Comm leads the homogeneous MATH500 and AIME comparisons at both model scales, while StateBridge leads GPQA.

\subsection{Cross-Family Communication}

We transfer Qwen3-8B messages to Llama-3.2-3B-Instruct using token-string correspondence and native Receiver embeddings and Jacobian anchors (Appendix~\ref{app:cross-family}).
On GPQA-Diamond, $H+J$ obtains a mean of 75 correct answers out of 198, compared with 71 for $H$, 73 for TextMAS, 69 for shuffled $J$, and 40 for Receiver-only (Table~\ref{tab:cross-family}).
The matched $J$ message improves on $H$ by four mean correct answers and on shuffled $J$ by six, extending the interface across both model families and vocabularies.

\section{Related Work}
\label{sec:related}

\paragraph{Continuous reasoning.}
Coconut trains a recurrence that feeds hidden states back as inputs \citep{hao2025coconut}, while Soft Thinking uses probability-weighted token embeddings in a frozen model \citep{zhang2025softthinking}.
\galr{} incorporates candidate geometry by multiplying predictive probabilities by their probability-weighted geometric support before constructing the next input.

\paragraph{Latent communication.}
CIPHER communicates probability-weighted embeddings through the Receiver's embedding table \citep{pham2024cipher}.
Interlat learns a communication adapter \citep{du2025interlat}, and ThoughtComm learns shared and private latent representations with a prefix adapter \citep{zheng2025thoughtcomm}.
LatentMAS aligns hidden states for latent generation and transfers layer-wise KV caches \citep{zou2025latentmas}; StateBridge uses closed-form Procrustes alignment for same-model communication \citep{peng2026statebridge}.
\jcomm{} pairs distribution-derived inputs with intermediate-state readouts, using fixed model-specific dictionaries while keeping KV caches local.

\paragraph{Activation and Jacobian interfaces.}
\citet{ramesh2025activations} exchange intermediate activations through addition, averaging, or replacement.
The Jacobian lens supplies vocabulary-indexed intermediate-state readouts and coordinate interventions \citep{gurnee2026verbalizable}.
Our $J$ channel transmits token IDs and signed readouts, reconstructs updates through the Receiver's regularized dual dictionary, and writes them at the paired $H$ positions during prefill.

\paragraph{Geometric reweighting.}
GAF is a Baum--Eagon growth transformation and a symmetric replicator step \citep{baum1967inequality,losert1983dynamics}, related to affinity updates in dominant-set clustering \citep{pavan2007dominant}.
\galr{} applies one such update to each model-generated candidate distribution.

\section{Conclusion}
\label{sec:conclusion}

We introduced \jcomm{}, combining geometric agreement reasoning with dual-channel latent communication between frozen LLMs.
\gear{} constructs continuous inputs through geometric reweighting, while $H+J$ transfers paired feedback distributions and intermediate-state readouts across model widths.
In the primary Qwen3-8B $\to$ Qwen3-4B evaluation, \jcomm{} achieves the highest mean score across seven benchmarks while reducing generated text tokens relative to TextMAS.

\section*{Limitations}

J-Comm uses a token correspondence and calibrated Jacobian dictionaries for each model pair.
Cross-vocabulary communication adds token-string matching and subtoken decomposition.
The adaptive Sender can consume up to 32,768 latent steps, so its computation remains a heavy component of total inference cost even with a short communicated suffix.

\clearpage
\bibliography{references}

\appendix
\raggedbottom
\renewcommand{\dbltopfraction}{0.9}
\section{Current Configuration}
\label{app:settings}

\begin{table}[!htbp]
\centering
\small
\begin{tabular}{@{}ll@{}}
\toprule
Setting & Value \\
\midrule
Sender / Receiver & Qwen3-8B / Qwen3-4B \\
Sender / Receiver block & 26 / 26 \\
Sender temperature & 0.6 \\
Initial top-$k$ candidates & 30 \\
Nucleus threshold & 0.95 \\
Relative min-$p$ threshold & 0.001 \\
Maximum feedback candidates & 10 \\
Minimum / maximum steps & 40 / 32,768 \\
Cold entropy threshold & 0.01 nats \\
Consecutive cold steps & 256 \\
Communication length $M$ & 400 \\
$J$ anchors per position $P$ & 32 \\
Dual ridge $\lambda$ / gain $\alpha$ & $10^{-3}$ / 1 \\
Magnitude reference block & 29 \\
Receiver output cap & Task-specific (Table~\ref{tab:text-budgets}) \\
Receiver temperature / top-$p$ & 0.6 / 0.95 \\
Receiver top-$k$ & 20 \\
\bottomrule
\end{tabular}
\caption{Default Qwen3-8B $\to$ Qwen3-4B configuration. Sender latent steps, communication positions, and Receiver output tokens have separate budgets.}
\label{tab:settings}
\end{table}

\subsection{Implementation Details}
\label{sec:implementation}

\paragraph{Layers and calibration.}
Our main configuration reads Sender block 26 and writes Receiver block 26, using zero-based implementation indices.
The layer-26 communication setting of \citet{ramesh2025activations} motivated this candidate.
Figure~\ref{fig:j-layer-ablation} compares Receiver write blocks 20, 26, and 29.
We calibrate the $J$ lenses on generic text and keep both model weights and lens dictionaries fixed during inference.
The Receiver-26 lens uses 32 calibration prompts.

\paragraph{Candidate filtering.}
We divide logits by temperature, retain the top 30 logits, and normalize their probabilities.
Nucleus filtering retains candidates through the first cumulative-probability crossing of 0.95, followed by renormalization.
We remove candidates whose probability is below $0.001$ times the largest probability, retain at most the first 10 candidates, and renormalize to obtain $p_t$.
All geometric calculations use these candidates.

\paragraph{Stopping.}
After 40 consumed steps, a top-$q$ end-of-thinking candidate terminates the trajectory before that candidate is consumed.
The cold-stop statistic is the entropy of the full-vocabulary temperature-scaled predictive distribution, before candidate filtering or geometric reweighting.
An entropy below 0.01 nats for 256 consecutive consumed steps triggers termination after the current step.
The hard cap is 32,768 consumed steps.

\paragraph{Numerics and inference.}
Candidate probabilities, cosine kernels, and dual reconstruction are computed in float32; latent inputs and model execution use bfloat16.
The primary Receiver uses vLLM 0.10.2 V0, with prefix caching and chunked prefill disabled for the $H/J$ hooks, a native context limit of 40,960, and no YaRN extension.
The $H$ radius is calibrated separately from $J$'s relative intervention magnitude.

\subsection{Text Output Budgets}
\label{app:task-budgets}

Table~\ref{tab:text-budgets} lists the maximum number of output tokens for each agent on each dataset.
ALFWorld allows at most 50 action turns per episode, with each agent capped at 2,048 output tokens per turn.

\begin{table}[!htbp]
\centering\small
\setlength{\tabcolsep}{5pt}
\begin{tabular}{@{}lrr@{}}
\toprule
Dataset & \shortstack{Sender\\token cap} & \shortstack{Receiver\\token cap} \\
\midrule
GSM8K & 1,024 & 2,048 \\
MATH500 & 4,096 & 8,192 \\
GPQA-Diamond & 4,096 & 8,192 \\
MedQA & 2,048 & 4,096 \\
MBPP+ & 2,048 & 4,096 \\
AIME2024+2025 & 8,192 & 31,000 \\
ALFWorld & 2,048/turn & 2,048/turn \\
\bottomrule
\end{tabular}
\caption{Maximum text output tokens per agent. ALFWorld budgets apply to each action turn, with at most 50 turns per episode.}
\label{tab:text-budgets}
\end{table}

\section{Additional Experiments}

\subsection{Cross-Family Communication}
\label{app:cross-family}

\begin{table}[!htbp]
\centering\small
\begin{tabular}{@{}lrrrrr@{}}
\toprule
Task & R & Text & $H$ & $HJ$ & $HJ_{\rm shuf}$ \\
\midrule
GPQA & 40 & 73 & 71 & 75 & 69 \\
\bottomrule
\end{tabular}

\caption{Qwen3-8B $\to$ Llama-3.2-3B-Instruct on GPQA-Diamond: mean correct answers over three runs, out of 198 questions. Each condition uses 40 communicated positions when applicable and an 8,192-token Receiver cap; the TextMAS Sender cap is 4,096. $HJ$ writes at native Llama block 20.}
\label{tab:cross-family}
\end{table}

We preserve the Sender's consumed $q$ distribution while changing the vocabulary bridge.
Exact shared byte-level token strings use the corresponding Llama embedding; an unmatched ordinary token is decomposed using Llama's byte-level BPE and represented by the mean of its subtoken embeddings.
The $q$-weighted mixture is then normalized to the native Receiver embedding scale, still producing one continuous position per Sender step.
The $J$ channel recomputes signed top-32 readouts over 109,566 paired token anchors and reconstructs through the previously calibrated native Llama lens (write block 20, magnitude reference block 22).
The bridge reuses the Sender trajectory and calibrated Receiver lens.
Matched $HJ$ obtains 75 mean correct answers, compared with 71 for $H$, 69 for shuffled $J$, and 73 for TextMAS (Table~\ref{tab:cross-family}).

\subsection{Case Study: Inspecting Task-Related $J$ Coordinates}
\label{app:j-case-study}

We inspect a MedQA mechanism question and a MATH500 counting question in the Qwen3-8B $\to$ 4B setting (Figure~\ref{fig:j-case-study}).
Both examples use 400 communicated positions, 32 anchors per position, and Receiver block 26.
The selected vocabulary coordinates expose task-related concepts in the transmitted messages.

\begin{figure*}[!t]
  \centering
  \includegraphics[width=\textwidth]{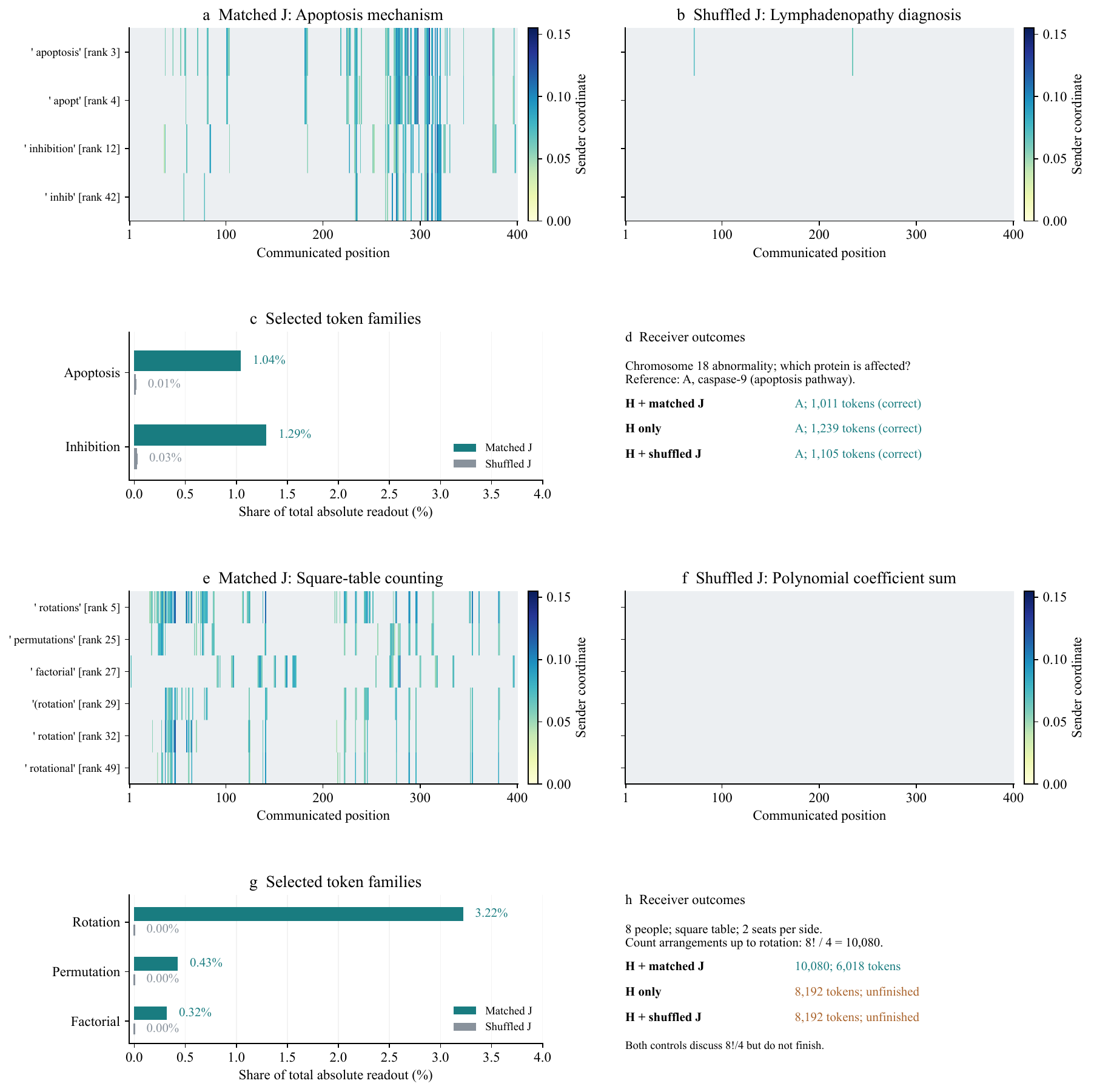}
  \caption{Inspectable $J$ coordinates and shuffled donors for a MedQA apoptosis question (a--d) and MATH500 square-table counting (e--h), both with 400 communication positions. Gray cells denote absent sparse coordinates. All three medical-case conditions answer correctly; only matched $J$ completes the counting solution within the output budget.}
  \label{fig:j-case-study}
\end{figure*}

\paragraph{MedQA: Apoptosis Regulation.}
The MedQA question describes a chromosome-18 abnormality and asks which protein's regulation is most affected; the reference answer is caspase-9 (option A).
The relevant intermediate mechanism is apoptosis inhibition associated with BCL2, which is also discussed in the Receiver explanation.
The matched message contains \texttt{apoptosis}, \texttt{apopt}, and \texttt{inhibition}, ranked third, fourth, and twelfth by total absolute readout magnitude.
Neither apoptosis nor inhibition is explicitly named in the question or its answer choices.
The \texttt{apoptosis} coordinate occurs at 68 of the 400 positions, compared with two positions in the shuffled donor, a different lymphadenopathy question.
Tokens containing \texttt{apopt} account for 1.04\% of the matched message's absolute readout magnitude, versus 0.0145\% for the donor.
The heatmaps show the selected sparse readouts, and the bars summarize case-insensitive \texttt{apopt} and \texttt{inhib} families.

All three conditions answer A correctly: matched $H+J$ uses 1,011 generated tokens, $H$ alone uses 1,239, and shuffled $H+J$ uses 1,105, under a 4,096-token cap.
Their explanations all identify the BCL2/apoptosis mechanism.
The shuffled message comes from a different question under a random derangement of the MedQA set.

\paragraph{MATH500: Counting under Rotational Symmetry.}
Figure~\ref{fig:j-case-study} (bottom) provides a complementary MATH500 example: eight people sit around a square table with two seats per side, and arrangements equivalent under rotation are identified.
The correct count is $8!/4=10{,}080$.
This example uses 400 communicated positions, 32 anchors per position, Receiver block 26, and an 8,192-token output cap in the same Qwen3-8B $\to$ 4B setting.
The matched message contains \texttt{permutations}, \texttt{factorial}, and \texttt{rotations}, connecting inspectable vocabulary coordinates to counting operations in the solution.
Case-insensitive token families containing \texttt{permut}, \texttt{factorial}, and \texttt{rotat} account for 0.43\%, 0.32\%, and 3.22\% of total absolute readout magnitude, respectively; none appears in the shuffled donor's sparse message, which comes from a polynomial coefficient-sum question.

With the same recipient $H$ and output budget, matched $H+J$ completes correctly in 6,018 tokens, whereas $H$ alone and shuffled $H+J$ reach the 8,192-token cap without completing an answer.
Both controls discuss $8!/4$ alongside the competing $7!$ heuristic.

\section{Sender and Receiver Prompts}
\label{app:prompts}
\begingroup
\renewenvironment{quote}
  {\list{}{\leftmargin=1em\rightmargin=1em
    \topsep=3pt\parsep=2pt\partopsep=0pt}\item\relax}
  {\endlist}
We reproduce the user-message templates used in our Qwen experiments. Braced placeholders denote task-dependent content, rather than literal prompt text. Each agent applies its own tokenizer's chat template with \texttt{add\_generation\_prompt=True} and \texttt{enable\_thinking=True}; the implementation then appends \texttt{<think>} followed by a newline. The role instructions below are supplied as a user message.

\subsection{Mathematical Reasoning}
The GSM8K, MATH500 and AIME experiments use the following templates.
\paragraph{Sender (Planner).}

\begin{quote}
\small\ttfamily\raggedright
You are the Planner Agent. Develop a concise step-by-step plan and useful intermediate deductions for solving the following question. The Solver Agent will see the same question and continue your work. Do not give the final answer.
\par

\par
Question: \{QUESTION\}
\end{quote}

\begin{samepage}
\paragraph{Receiver (Solver).}

\begin{quote}
\small\ttfamily\raggedright
You are the Solver Agent. Solve the question below step by step and put your final answer inside \textbackslash{}boxed\{\}. You may receive reference work from a Planner Agent. Check it and continue any useful reasoning; ignore it if it is irrelevant or incorrect. The reference may be empty.
\par

\par
Question: \{QUESTION\}
\par

\par
Planner's reference work:
\par
\{MESSAGE\}
\end{quote}
\end{samepage}

For latent communication, \texttt{\{MESSAGE\}} is replaced by the $M$ continuous $H$ input positions. The $H+J$ condition uses the same prompt and positions, with $J$ applied at the intermediate layer during prefill; $J$ is not rendered as text. TextMAS inserts the Sender's textual message at the same location. The Receiver-only condition leaves this location empty.

\subsection{Multiple-Choice and Code Tasks}
\paragraph{Sender.}

\begin{quote}
\small\ttfamily\raggedright
You are the Planner Agent. Develop a concise plan and useful intermediate deductions for the following question. The Solver Agent sees the same question. Do not give the final answer.
\par

\par
\{QUESTION\}
\end{quote}

\paragraph{Receiver.}

\begin{quote}
\small\ttfamily\raggedright
You are the Solver Agent. Solve the following question. \{ANSWER\_INSTRUCTION\} You may receive a Planner reference; check it and continue useful reasoning. The reference may be empty.
\par

\par
Question: \{QUESTION\}
\par

\par
Planner reference:
\par
\{MESSAGE\}
\end{quote}

For GPQA-Diamond and MedQA, the question includes the answer options and \texttt{\{ANSWER\_INSTRUCTION\}} is:

\begin{quote}
\small\ttfamily\raggedright
Choose one option. Put the single final option letter inside \textbackslash{}boxed\{\}.
\end{quote}

For MBPP+, the answer instruction is:

\begin{quote}
\small\ttfamily\raggedright
Return a complete Python implementation in a python code block.
\end{quote}

The MBPP+ question itself is prefixed with:

\begin{quote}
\small\ttfamily\raggedright
Write a self-contained Python implementation for this problem. Return the complete function and any necessary imports in a python code block.
\par

\par
\{PROBLEM\}
\end{quote}

\subsection{Compact-Plan TextMAS Variant}
The compact-plan variant appends the following instruction to the Sender's user message, where \texttt{\{PLAN\_CAP\}} is the configured textual-plan budget. This addition is specific to that variant, rather than the original full-output TextMAS prompt.

\begin{quote}
\small\ttfamily\raggedright
After thinking, send the Solver only a concise plan of 3-6 numbered steps, within \{PLAN\_CAP\} tokens. Include useful intermediate deductions, key checks and unresolved uncertainties. Do not repeat the question or give the final answer.
\end{quote}

For MBPP+, it additionally appends:

\begin{quote}
\small\ttfamily\raggedright
Describe the algorithm, edge cases and complexity; do not write the full implementation.
\end{quote}

\subsection{ALFWorld}
Both agents receive the following shared rules, followed by the interaction history.

\begin{quote}
\small\ttfamily\raggedright
Interact with the household to complete the stated task. Choose exactly one next action using the observation and previous actions. Objects and receptacles must use their exact names and numbers.
\par
\noindent\begin{minipage}{\linewidth}
\setlength{\parskip}{2pt}
Available action forms:
\par
go to \{receptacle\}
\par
open \{receptacle\}
\par
close \{receptacle\}
\par
take \{object\} from \{receptacle\}
\par
move \{object\} to \{receptacle\}
\par
use \{object\}
\par
examine \{object or receptacle\}
\par
clean \{object\} with \{receptacle\}
\par
heat \{object\} with \{receptacle\}
\par
cool \{object\} with \{receptacle\}
\par
look
\par
inventory
\end{minipage}
\par
Navigate to a receptacle before interacting with it. Closed containers must be opened. "Nothing happens" means the action did not change the environment.
\par
Think about the next useful action. End your response with exactly one line: Action: \textless{}action\textgreater{}.
\end{quote}

The Sender's message is \texttt{\{RULES\}}, two newlines, and \texttt{\{HISTORY\}}. The history starts with \texttt{Observation:} and the initial observation, then appends each executed \texttt{Action:} and resulting \texttt{Observation:}.
\paragraph{Receiver with latent communication.}

\begin{quote}
\small\ttfamily\raggedright
\{RULES\}
\par

\par
\{HISTORY\}
\par

\par
Planner latent message:
\par
\{MESSAGE\}
\par

\par
Choose the next action.
\end{quote}

TextMAS uses the same template with \texttt{Planner text message:} in place of \texttt{Planner latent message:}, and inserts the generated text. The independent Receiver receives only the shared rules and history.
\endgroup

\section{GAF: Proof and Supporting Derivations}
\label{app:gaf}

\subsection{Geometric Selectivity}

The mass assigned to a given target depends on how strongly the other candidates agree with one another.
The following result makes this dependence explicit.

Fix a target candidate with unit direction $u$ and prior mass $g\in(0,1)$.
Write $v_i$ for at most $n\ge2$ remaining directions and $w_i$ for their conditional weights, so $w_i\ge0$ and $\sum_iw_i=1$.
Define their mean affinity to the target and their internal agreement by
\begin{equation}
 \chi=\sum_iw_i e^{u^\top v_i-1},\qquad
 \mathcal B=\sum_{i,j}w_iw_j e^{v_i^\top v_j-1}.
 \label{eq:gaf-affinities}
\end{equation}
We consider the regime where $\chi$ is at most $e^{-1}$, the kernel value for orthogonal directions.
This is a condition on the weighted mean affinity; individual candidates may have larger affinities to the target.

\begin{theorem}[Geometric selectivity of one GAF step]
\label{thm:gaf}
Suppose $e^{-2}\le \chi\le e^{-1}$, and set
\begin{align}
 \tau&=1+\log \chi,\nonumber\\
 b_n(\chi)&=\frac1n+\frac{n-1}{n}
   e^{-\frac{n}{n-1}(1-\tau^2)},\nonumber\\
 Q(g,\chi,b)&=\frac{g[g+(1-g)\chi]}
 {g^2+2g(1-g)\chi+(1-g)^2b}.
 \label{eq:gaf-envelope}
\end{align}
Then $b_n(\chi)\le \mathcal B\le1$, and the updated target mass satisfies
\begin{equation}
 Q(g,\chi,1)\le q_\star\le Q(g,\chi,b_n(\chi)).
 \label{eq:gaf-mass-range}
\end{equation}
Both endpoints are attainable over admissible geometries when $d_S\ge n$.
Let $\beta_n(\chi)=[b_n(\chi)-\chi]/[1+b_n(\chi)-2\chi]$.
Every admissible geometry decreases target mass if $g<\beta_n(\chi)$ and increases it if $g>1/2$.
For $\beta_n(\chi)<g<1/2$ and $d_S\ge n$, geometries with gain and with loss both exist at the same $(g,\chi,n)$.
At $g=\beta_n(\chi)$ the gain is nonpositive; at $g=1/2$ it is nonnegative, and these zero bounds are attained when $d_S\ge n$.
At $\chi=e^{-2}$ the thresholds coincide at $1/2$.
\end{theorem}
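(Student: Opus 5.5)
The plan is to reduce everything to the single scalar $\mathcal B$. Write the candidate distribution as $p_\star=g$ for the target and $p_i=(1-g)w_i$ for the others. By Equation~\ref{eq:support}, the target's support is $s_\star=g+(1-g)\chi$. By Equation~\ref{eq:agreement}, the normalizer is
\[
C(p)=g^2+2g(1-g)\chi+(1-g)^2\mathcal B .
\]
Hence Equation~\ref{eq:q} gives exactly $q_\star=g\,s_\star/C(p)=Q(g,\chi,\mathcal B)$. For fixed $(g,\chi)$, $Q$ is strictly decreasing in its third argument. So the mass range in Equation~\ref{eq:gaf-mass-range} follows from the bounds $b_n(\chi)\le\mathcal B\le 1$. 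Attainability of the endpoints follows from geometries attaining the two bounds on $\mathcal B$. The upper bound $\mathcal B\le1$ is immediate because every kernel entry is at most $1$.

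The main step, and the one I expect to be the obstacle, is the lower bound $\mathcal B\ge b_n(\chi)$. I would decompose $v_i=a_iu+p_i$ with $p_i\perp u$, and set $\bar a=\sum_iw_ia_i$.

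\emph{Bounding $\bar a$.} Convexity of $\exp$ gives $\chi\ge e^{\bar a-1}$, so $\bar a\le\tau$. The hypothesis $\chi\le e^{-1}$ gives $\tau\le 0$, and therefore $\bar a^2\ge\tau^2$.

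\emph{Bounding the off-diagonal inner products.} The mean $m=\sum_iw_iv_i$ satisfies $\|m\|_2^2=\sum_{i,j}w_iw_jv_i^\top v_j\ge\bar a^2\ge\tau^2$. Let $S=\sum_iw_i^2\in[1/n,1]$. The off-diagonal pairs then carry total weight $1-S$ and total inner product at least $\tau^2-S$.

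\emph{Combining.} Applying Jensen to the off-diagonal part of $\mathcal B$ (the case $S=1$ is trivial) gives
\[
\mathcal B\ge f(S):=S+(1-S)\exp\!\Bigl(-\tfrac{1-\tau^2}{1-S}\Bigr).
\]
With $x=(1-\tau^2)/(1-S)\ge0$, a direct computation gives $f'(S)=1-(1+x)e^{-x}\ge0$. So $f$ is minimized at $S=1/n$, and $f(1/n)=b_n(\chi)$.

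For attainability I would use two constructions.
\begin{itemize}
\item \emph{Attaining $b_n(\chi)$.} Take equal weights and $v_i=\tau u+\sqrt{1-\tau^2}\,e_i$, where the $e_i$ form a regular simplex in $u^\perp$ with $e_i^\top e_j=-1/(n-1)$. This gives every pairwise kernel value $e^{-\frac{n}{n-1}(1-\tau^2)}$ and every affinity $e^{\tau-1}=\chi$. It needs $|\tau|\le1$, which is exactly the hypothesis $\chi\ge e^{-2}$, and it needs $d_S\ge n$.
\item \emph{Attaining $1$.} Take all $v_i$ equal to one unit vector with $u^\top v=\tau$.
\end{itemize}
Continuously deforming the simplex radius toward collapse, while keeping $u^\top v_i=\tau$, sweeps out every intermediate value of $\mathcal B$ at fixed $\chi$.

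For the threshold claims I would compute the sign of the gain directly:
\[
s_\star-C(p)=(1-g)\bigl[g(1+\mathcal B-2\chi)-(\mathcal B-\chi)\bigr].
\]
So the target mass increases iff $g>\beta(\mathcal B):=(\mathcal B-\chi)/(1+\mathcal B-2\chi)$. Since $d\beta/d\mathcal B=(1-\chi)/(1+\mathcal B-2\chi)^2>0$, the threshold ranges over $[\beta_n(\chi),\beta(1)]=[\beta_n(\chi),1/2]$ as $\mathcal B$ ranges over $[b_n(\chi),1]$. This yields each remaining claim:
\begin{itemize}
\item universal loss for $g<\beta_n(\chi)$ and universal gain for $g>1/2$;
\item both behaviours for intermediate $g$, by choosing $\mathcal B$ on either side via the attainable geometries;
\item zero gain attained at the two endpoints, by taking $\mathcal B=b_n(\chi)$ and $\mathcal B=1$ respectively.
\end{itemize}
Finally, at $\chi=e^{-2}$ we have $\tau=-1$, so $b_n=1$ and $\beta_n=1/2$, and the two thresholds coincide.
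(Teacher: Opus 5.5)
Your proposal is correct and follows the paper's proof essentially step for step. It uses the same reduction $q_\star=Q(g,\chi,\mathcal B)$, the same two Jensen steps combined with $\sum_i w_i^2\ge 1/n$ and the derivative $1-(1+x)e^{-x}\ge 0$ to get $\mathcal B\ge b_n(\chi)$, the same latitude-simplex and coincident-opponent constructions, and the same increasing threshold $(\mathcal B-\chi)/(1+\mathcal B-2\chi)$.

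The one check you omit is $b_n(\chi)>\chi$. The paper gets it from strict Jensen on the two-point mixture defining $b_n$: $b_n(\chi)>e^{\tau^2-1}\ge e^{\tau-1}=\chi$ for $\tau\in(-1,0]$, and $b_n=1>\chi$ at $\tau=-1$. This check guarantees $\beta_n(\chi)\in(0,1/2]$, so $g=\beta_n(\chi)$ is an admissible prior mass and the attained-zero claim there is meaningful. Your intermediate-value sweep is not needed for the theorem. Note also that for $d_S=n=2$ the latitude set is only two points, so moving the directions continuously is impossible there; this is why the paper varies the weights on a fixed simplex instead.
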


The proof below uses realizable spherical configurations: an equal-weight latitude simplex attains the lower energy bound, and coincident opponent directions attain the upper bound.
The theorem characterizes the target mass after one reweighting in the stated affinity regime. A target group obeys the same formulas when its directions coincide.

\subsection{Proof of Theorem~\ref{thm:gaf}}
\begin{proof}
Let $\mu=\sum_iw_iv_i$ and $r=\sum_iw_i^2\ge1/n$.
Jensen's inequality and $\tau\le0$ give
\[
 e^{u^\top\mu-1}\le \chi,\qquad
 u^\top\mu\le \tau\le0,\qquad \norm{\mu}^2\ge \tau^2.
\]
For $r<1$, the ordered off-diagonal weights $w_iw_j/(1-r)$ sum to one.
Since $\sum_{i\ne j}w_iw_jv_i^\top v_j=\norm{\mu}^2-r$, another application of Jensen yields
\begin{align*}
 \mathcal B&\ge r+(1-r)\exp\!\left(\frac{\norm{\mu}^2-1}{1-r}\right)\\
  &\ge r+(1-r)\exp\!\left(-\frac{1-\tau^2}{1-r}\right).
\end{align*}
For $a=1-\tau^2\ge0$, the last expression increases with $r$: its derivative is $1-(1+z)e^{-z}\ge0$, where $z=a/(1-r)$.
Substitution of $r\ge1/n$ proves $\mathcal B\ge b_n(\chi)$.
If $r=1$, then $\mathcal B=1$ and the same bound holds.
The upper bound follows from $v_i^\top v_j\le1$.

For attainability when $d_S\ge n$, choose unit vectors $z_1,\ldots,z_n$ forming a regular simplex in $u^\perp$, so $z_i^\top z_j=-1/(n-1)$ for $i\ne j$, and set
\[
 v_i=\tau u+\sqrt{1-\tau^2}\,z_i,\qquad w_i=1/n.
\]
All target affinities equal $e^{\tau-1}=\chi$, and all off-diagonal opponent affinities equal $c=\exp[-n(1-\tau^2)/(n-1)]$.
Thus $\mathcal B=1/n+(1-1/n)c=b_n(\chi)$.
A single opponent at any unit direction $v$ satisfying $u^\top v=\tau$ attains $\mathcal B=1$ at the same $\chi$.
Multiple candidate indices may also share this direction.
In fact every intermediate energy is attainable: retaining the latitude simplex, set
\begin{align*}
 w_1&=\frac{1+(n-1)a}{n},\\
 w_2=\cdots=w_n&=\frac{1-a}{n},\qquad a\in[0,1].
\end{align*}
The cross-affinity stays fixed and
\[
 \mathcal B=c+(1-c)\frac{1+(n-1)a^2}{n}
\]
ranges continuously from $b_n(\chi)$ to $1$.
If exactly $n\ge2$ distinct positively weighted directions are required and $\chi>e^{-2}$, the upper endpoint is only a supremum, approached as $a\to1$.
At $\chi=e^{-2}$, every supported opponent is $-u$, so $\mathcal B=1$.

The target support is $g+(1-g)\chi$ and the total agreement is
\[
 C=g^2+2g(1-g)\chi+(1-g)^2\mathcal B>0.
\]
Consequently $q_\star=Q(g,\chi,\mathcal B)$ and
\begin{equation}
 q_\star-g
 =\frac{g(1-g)\{g(1-\chi)-(1-g)(\mathcal B-\chi)\}}{C}.
 \label{eq:gaf-exact-gain}
\end{equation}
The updated mass decreases strictly with $\mathcal B$, proving Equation~\ref{eq:gaf-mass-range} and its attainability.
For $\tau\in(-1,0]$, strict Jensen applied to the diagonal/off-diagonal mixture defining $b_n$ gives
\[
 b_n(\chi)>e^{\tau^2-1}\ge e^{\tau-1}=\chi.
\]
At $\tau=-1$, $b_n(\chi)=1>\chi$ as well.
The zero-gain threshold
\[
 H(\mathcal B,\chi)=\frac{\mathcal B-\chi}{1+\mathcal B-2\chi}
\]
has derivative $\partial_{\mathcal B}H=(1-\chi)/(1+\mathcal B-2\chi)^2>0$.
It therefore ranges from $\beta_n(\chi)$ to $1/2$ as $\mathcal B$ ranges from $b_n(\chi)$ to $1$.
Equation~\ref{eq:gaf-exact-gain} and the endpoint constructions prove the stated gain and loss cases, including their boundaries.
\end{proof}

The argument uses classical spherical-energy ideas.
Regular simplices are universally optimal for completely monotonic squared-distance potentials, including $e^{-\norm{v_i-v_j}^2/2}$ \citep{cohn2007universally}; weighted spherical potentials and constrained energy minimization are studied by \citet{borodachov2026bounds}.
Here the energy bound is conditioned on the target affinity $\chi$ and used to characterize one GAF step.

\subsection{Agreement Gain and Consumed-Input Displacement}
For a fixed candidate set, let $s=Kp$, $C=C(p)>0$, $\xi=q-p$, and
\[
 V_s=\sum_i p_i(s_i-C)^2.
\]
The identity $\xi_i=p_i(s_i-C)/C$ implies $\xi^\top Kp=V_s/C$, so expansion of $C(p+\xi)$ gives
\begin{equation}
 C(q)-C(p)=\frac{2V_s}{C}+\xi^\top K\xi.
 \label{eq:gaf-agreement-gain}
\end{equation}
The exponential kernel is positive semidefinite because
\[
 K_{ij}=e^{-1}\sum_{k=0}^{\infty}
 \frac{\langle u_i^{\otimes k},u_j^{\otimes k}\rangle}{k!}
\]
is an absolutely convergent sum of positive-semidefinite kernels.
Thus both terms in Equation~\ref{eq:gaf-agreement-gain} are nonnegative.
Equality holds exactly when $V_s=0$, equivalently $q=p$, or $s_i=C$ on the support of $p$.
This is also the Baum--Eagon inequality applied to the homogeneous polynomial $C(p)$: since $\partial_iC=2(Kp)_i$, its growth transformation is precisely Equation~\ref{eq:q} \citep{baum1967inequality}.
The same multiplicative form appears in symmetric replicator dynamics \citep{hofbauer1998evolutionary,losert1983dynamics} and affinity-based dominant-set methods \citep{pavan2007dominant}.

For the raw, nonzero embeddings used by Equation~\ref{eq:feedback}, define
\begin{gather*}
 X_p=\sum_i p_iE_i^S,\qquad X_q=\sum_iq_iE_i^S,\\
 V_E=\sum_i p_i\norm{E_i^S-X_p}^2.
\end{gather*}
Since $\sum_i p_i(s_i-C)=0$,
\begin{align*}
 X_q-X_p&=\frac1C\sum_i p_i(s_i-C)(E_i^S-X_p),\\
 \norm{X_q-X_p}^2
 &\le\frac{V_EV_s}{C^2}
 \le\frac{V_E}{2C}\{C(q)-C(p)\}.
\end{align*}
The first inequality is weighted Cauchy--Schwarz.
These identities hold for every candidate distribution under the fixed kernel.
An unchanged mixture need not imply unchanged weights; that converse requires affine independence of the supported raw embeddings.

\subsection{Boundary Cases and Approximate Coherence}
The energy bound remains valid for $d_S<n$, but its lower endpoint need not be attainable.
With one supported opponent, $\mathcal B=1$ and the exact gain threshold is $1/2$; the formula for $b_n$ is not used at $n=1$.
The upper limit $\chi\le e^{-1}$ is essential: two equal-weight opponents at $u$ and $-u$ give $\chi=\mathcal B=(1+e^{-2})/2\approx0.56767$, whereas extending $b_2$ to this value would give $b_2(\chi)\approx0.59859>\mathcal B$.

For a target group of total mass $g$, suppose its unit directions $u_i$ satisfy $\norm{u_i-u}\le\delta$.
Let $A$ denote its conditional within-group affinity and $X$ its actual cross-affinity, retaining $\chi$ for the representative $u$.
Pairwise target distances are at most $2\delta$, and $s\mapsto e^{s-1}$ is $1$-Lipschitz on $[-1,1]$, giving
\[
 A\ge e^{-2\delta^2},\qquad |X-\chi|\le\delta.
\]
Set $L=g(1-\chi)-(1-g)(\mathcal B-\chi)$.
The actual difference between target and opponent mean support, $D=g(A-X)-(1-g)(\mathcal B-X)$, satisfies
\begin{align*}
 L-g(1-e^{-2\delta^2})-|1-2g|\delta
 &\le D\\
 &\le L+|1-2g|\delta.
\end{align*}
This follows by subtracting $L$ from $D$, leaving $g(A-1)+(1-2g)(X-\chi)$.
The group gains mass exactly when $D>0$ and loses mass exactly when $D<0$.
Together with $b_n(\chi)\le \mathcal B\le1$, these bounds give sufficient gain and loss conditions for approximately coherent groups in the stated affinity regime.

\section{Proof and Derivations for Finite-Write Fidelity}
\label{app:j-fidelity}

\subsection{Effect of a Finite J Write}
\label{sec:j-fidelity}

The reconstructed write passes through the Receiver's remaining layers and final normalization.
We analyze its effect relative to the $H$-only baseline, with the prompt, $H$ inputs, and output prefix fixed.
The token-ordering condition below compares the actual output with the average lens's prediction of the output after the write.
Abbreviate $B_t=B^R_{\ell_R,I_t}$, $G_t=B_tB_t^\top$, $d_t=d^R_{\ell_R,t}$, $r_t=r^R_{\ell_R,t}$, and $\mathcal J_R=\mathcal J^R_{\ell_R}$.
Assume every selected raw lens row and every written $d_t$ is nonzero.
The actual update at each retained position is
\[
 \delta_t=\kappa_t d_t,\qquad
 \kappa_t=\frac{\alpha\norm{r_t}\rho_t}{\norm{d_t}}.
\]
Let $r$ and $\delta$ stack all $M$ retained positions. Let $F(r)$ map these residuals to the pre-final-RMSNorm residual at the fixed output position; unedited positions are fixed parameters.
Write $y=F(r)$, $y'=F(r+\delta)$, and
\begin{gather*}
 w_v=\operatorname{diag}(\gamma_R)u^R_v,\qquad
 s(z)=\sqrt{\norm{z}^2/d_R+\varepsilon},\\
 \ell_v(z)=w_v^\top z/s(z),
\end{gather*}
for a bias-free output head with $\varepsilon>0$.
Define the nominal endpoint, its finite mismatch, and the nominal response by
\begin{align*}
 \widehat y&=y+\sum_t\mathcal J_R\delta_t,
 &e&=y'-\widehat y,\\
 \zeta_v&=w_v^\top(\widehat y-y),
 &n_v&=w_v^\top\widehat y.
\end{align*}
Thus $\zeta_v$ retains the norms of the raw lens rows $(\mathcal J_R)^\top w_v$, which unit anchors alone discard.

\begin{theorem}[Finite-write fidelity and conditional output ordering]
\label{thm:j-fidelity}
For $\lambda>0$, put $A_t=G_t(G_t+\lambda I)^{-1}$.
The selected-anchor readback satisfies
\begin{align}
 B_t(r_t+\delta_t)&=B_tr_t+\kappa_tA_tc_t,
 \label{eq:j-fidelity-readback}\\
 c_t-A_tc_t&=\lambda(G_t+\lambda I)^{-1}c_t.
 \notag
\end{align}
Writing $s=s(y)$ and $s'=s(y')$, every vocabulary token has the exact finite logit response
\begin{equation}
 \ell_v(y')-\ell_v(y)
 =\frac{\zeta_v+w_v^\top e-(s'-s)\ell_v(y)}{s'}.
 \label{eq:j-fidelity-logit}
\end{equation}
Let a finite $V\subseteq\mathcal V$ contain at least two tokens, and let $v$ uniquely maximize $n_j$ over $j\in V$.
Define
\begin{equation}
 R_v=\min_{j\in V\setminus\{v\}}
 \frac{n_v-n_j}{\norm{w_v-w_j}}.
 \label{eq:j-fidelity-radius}
\end{equation}
For $E\ge0$, $v$ remains the unique highest-logit token in $V$ for every endpoint error satisfying $\norm{e}\le E$ if and only if $E<R_v$.
\end{theorem}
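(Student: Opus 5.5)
The theorem has three parts, and I will prove them in order.

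\textbf{Readback identity.} I start from the reconstruction formula. By Equations~\ref{eq:dual} and \ref{eq:reconstruct}, $d_t=B_t^\top(G_t+\lambda I)^{-1}c_t$, so
\[
B_td_t=G_t(G_t+\lambda I)^{-1}c_t=A_tc_t .
\]
Linearity of $B_t$ and $\delta_t=\kappa_td_t$ then give Equation~\ref{eq:j-fidelity-readback}. For the residual identity I write
\[
I-G_t(G_t+\lambda I)^{-1}=\big[(G_t+\lambda I)-G_t\big](G_t+\lambda I)^{-1}=\lambda(G_t+\lambda I)^{-1}.
\]
Invertibility is automatic, because $G_t\succeq0$ and $\lambda>0$. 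The symbol $I$ here is the $P\times P$ identity, not the support $I_t$.

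\textbf{Exact finite logit response.} This step is pure algebra. Since $\ell_v(z)=w_v^\top z/s(z)$,
\[
\ell_v(y')-\ell_v(y)=\frac{w_v^\top y'-s'\ell_v(y)}{s'}=\frac{w_v^\top(y'-y)-(s'-s)\ell_v(y)}{s'},
\]
where I used $w_v^\top y=s\ell_v(y)$. Decomposing $y'-y=(\widehat y-y)+e$ gives $w_v^\top(y'-y)=\zeta_v+w_v^\top e$, which is Equation~\ref{eq:j-fidelity-logit}. The assumption $\varepsilon>0$ guarantees $s'>0$.

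\textbf{Ordering criterion.} The key observation is that $s(y')>0$ is a common positive factor across all tokens. Therefore $v$ is the unique maximizer of $\ell_j(y')$ over $V$ if and only if
\[
(w_v-w_j)^\top y'>0\quad\text{for all } j\in V\setminus\{v\}.
\]
Substituting $y'=\widehat y+e$, the condition reads $(n_v-n_j)+(w_v-w_j)^\top e>0$.

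\emph{Sufficiency.} Suppose $\norm{e}\le E<R_v$. By Cauchy--Schwarz,
\[
(w_v-w_j)^\top e\ge-\norm{w_v-w_j}\,E>-(n_v-n_j),
\]
because $R_v\norm{w_v-w_j}\le n_v-n_j$.

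Two degenerate cases need care. If $w_v=w_j$, then $n_v=n_j$, which contradicts unique maximality, so every denominator in Equation~\ref{eq:j-fidelity-radius} is nonzero. Consequently $R_v>0$ and is finite, since $V$ is finite.

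\emph{Necessity.} Suppose $E\ge R_v$. Let $j^\ast$ attain the minimum in Equation~\ref{eq:j-fidelity-radius}, and choose
\[
e=-R_v\,\frac{w_v-w_{j^\ast}}{\norm{w_v-w_{j^\ast}}}.
\]
This gives $\norm{e}=R_v\le E$ and $(w_v-w_{j^\ast})^\top y'=0$, so the two logits tie and uniqueness fails.

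\textbf{Main obstacle.} The delicate point is whether every $e$ with $\norm{e}\le E$ is actually admissible. The statement quantifies over all endpoint errors in the ball, and I will read it literally that way: the criterion is a worst case over the ball, not over errors realized by $F$. I will state this interpretation explicitly in the write-up. I will also check that the unique-maximality hypothesis is exactly what makes $R_v$ positive, which is why the threshold inequality $E<R_v$ is strict.
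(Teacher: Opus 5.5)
Your proposal is correct and follows essentially the same route as the paper's proof: it gets $B_td_t=A_tc_t$ from $d_t=B_t^\top(G_t+\lambda I)^{-1}c_t$, derives the logit response by direct algebra, and handles ordering with a common positive RMSNorm denominator, a Cauchy--Schwarz worst case over the error ball, and the opposing-contrast error that produces a tie at $\norm{e}=R_v$. The only cosmetic difference is that you use the tie construction for every $E\ge R_v$, whereas the paper separately remarks that a reversal occurs for $E>R_v$; your reading of the quantifier as a worst case over the full ball matches the paper's.
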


The identities include ridge shrinkage and final normalization at the implemented gain $\alpha=1$.
The condition $E<R_v$ preserves the lens-predicted winner at the fixed prefix; setting $V=\mathcal V$ gives full-vocabulary ordering.

\begin{proof}[Proof of Theorem~\ref{thm:j-fidelity}]
The reconstruction in Equations~\ref{eq:dual}--\ref{eq:reconstruct} is
$d_t=B_t^\top(G_t+\lambda I)^{-1}c_t$.
Multiplying by $B_t$ gives $B_td_t=A_tc_t$, and substituting
$\delta_t=\kappa_td_t$ proves Equation~\ref{eq:j-fidelity-readback}.
The ridge-bias identity follows from
$I-A_t=\lambda(G_t+\lambda I)^{-1}$.
$G_t+\lambda I$ is positive definite for every $\lambda>0$.

For the functional identity, the definition of $e$ gives
$y'=y+\sum_t\mathcal J_R\delta_t+e$.
Therefore $w_v^\top y'=w_v^\top y+\zeta_v+w_v^\top e$.
Using the actual final RMSNorm denominators,
\begin{align*}
 \ell_v(y')-\ell_v(y)
 &=\frac{w_v^\top y+\zeta_v+w_v^\top e}{s'}
   -\frac{w_v^\top y}{s}\\
 &=\frac{\zeta_v+w_v^\top e-(s'-s)\ell_v(y)}{s'}.
\end{align*}

For output ordering, RMSNorm has a common positive denominator for all tokens.
For every $j\in V\setminus\{v\}$, the sign of the actual logit difference is thus the sign of
\[
 n_v-n_j+(w_v-w_j)^\top e.
\]
Its exact minimum over $\norm e\le E$ is
$n_v-n_j-E\norm{w_v-w_j}$, by Cauchy--Schwarz with equality when $e$ opposes the head contrast.
The unique nominal winner assumption ensures every numerator in Equation~\ref{eq:j-fidelity-radius} is positive and every denominator is nonzero.
Since $V$ is finite, all actual comparisons remain strict throughout the closed ball exactly when $E<R_v$.
At $E=R_v$, let $j$ attain the minimum and choose
\[
 e=-R_v\,\frac{w_v-w_j}{\norm{w_v-w_j}}.
\]
This produces a tie with $j$.
For $E>R_v$, increasing the magnitude of this error produces a reversal.
This proves sharpness for the full error ball.
\end{proof}

\paragraph{Raw row scales and multiple message supports.}
The unit normalization in Equation~\ref{eq:anchor} removes a token-dependent scale.
Let
\[
 g_v=(\mathcal J_R)^\top w_v,\qquad \nu_v=\norm{g_v}.
\]
For $\nu_v>0$, $g_v=\nu_v b^R_{\ell_R,v}$, and hence
\begin{equation}
 \zeta_v=\nu_v\sum_t\kappa_t
 (b^R_{\ell_R,v})^\top B_t^\top
 (G_t+\lambda I)^{-1}c_t.
 \label{eq:j-fidelity-crossgram}
\end{equation}
For $v\in I_t$, the contribution of that position reduces to
$\nu_v\kappa_t[A_tc_t]_v$, with entries indexed by the transmitted IDs.
For $v\notin I_t$, Equation~\ref{eq:j-fidelity-crossgram} retains the corresponding cross-Gram row and need not vanish.
Each message position uses its own selected support.
For $\nu_v=0$, $\zeta_v=0$, while the endpoint mismatch can contribute $w_v^\top e$ to the response.

\paragraph{Final normalization.}
For any token pair $(i,j)$, put $a=w_i-w_j$ and
$m_{ij}(y)=\ell_i(y)-\ell_j(y)$.
Subtracting Equation~\ref{eq:j-fidelity-logit} for the two tokens gives
\[
\begin{aligned}
 m_{ij}(y')-m_{ij}(y)
 &=\frac{a^\top(\widehat y-y)+a^\top e}{s'}\\
 &\quad-\frac{s'-s}{s'}m_{ij}(y).
\end{aligned}
\]
The last term can alter the sign of a margin increment even when the nominal unnormalized response is positive.
By contrast, the common denominator cancels when comparing two logits at the same endpoint, which is the ordering statement of Theorem~\ref{thm:j-fidelity}.
For a differentiable downstream map, the full local derivative is
\[
 \frac{\partial\ell_v}{\partial r_t}
 =\frac{w_v^\top}{s(y)}
 \left(I-\frac{yy^\top}{\norm y^2+d_R\varepsilon}\right)
 \frac{\partial F}{\partial r_t}(r).
\]
The derivative includes the RMSNorm scaling and radial projection; Equation~\ref{eq:j-fidelity-logit} gives the corresponding finite-write response.

\paragraph{Endpoint mismatch.}
If $F$ is continuously differentiable on the intervention segment, the fundamental theorem of calculus yields
\begin{equation}
 e=\int_0^1\sum_t
 \left[
 \frac{\partial F}{\partial r_t}(r+\theta\delta)
 -\mathcal J_R
 \right]\delta_t\,d\theta.
 \label{eq:j-fidelity-path}
\end{equation}
This is the response to the simultaneous writes at all $M$ positions.
It includes context and position differences from the calibration average as well as variation along the finite intervention path.
If
\[
 \sup_{\theta\in[0,1]}
 \left\lVert
 \frac{\partial F}{\partial r_t}(r+\theta\delta)-\mathcal J_R
 \right\rVert_{\mathrm{op}}\le\eta_t
\]
then $\norm e\le\sum_t\eta_t\norm{\delta_t}$.
Alternatively, let
$\mathcal J_{\mathrm{stack}}=[\,\mathcal J_R\ \cdots\ \mathcal J_R\,]$.
If
$\lVert DF(r)-\mathcal J_{\mathrm{stack}}\rVert_{\mathrm{op}}\le\eta$
and $DF$ is $L$-Lipschitz in operator norm along the segment, then
\[
 \norm e\le\eta\norm\delta+\tfrac12L\norm\delta^2.
\]
These bounds express the endpoint error in terms of context-dependent Jacobian variation along the intervention path.

\paragraph{Average and pathwise responses.}
For $r\in\R^2$, consider
$F(r)=(2,0)+r+(-2,0)r_1^2$.
Its Jacobians at $e_1$ and $-e_1$ average exactly to $I_2$.
At $r=e_1$ with $\delta=(0.2,0.1)$, this average predicts the increment
$(0.2,0.1)$, whereas the actual increment is $(-0.68,0.1)$.
The example shows how nonlinear path variation contributes to the endpoint mismatch $e$.

\end{document}